\documentclass[conference]{IEEEtran}
\IEEEoverridecommandlockouts

\usepackage{cite}
\usepackage{amsmath,amssymb,amsfonts,amsthm}
\usepackage{algorithmic}
\usepackage{graphicx}
\usepackage{booktabs}
\usepackage{textcomp}
\usepackage{xcolor}
\usepackage{url}
\usepackage[ruled,linesnumbered]{algorithm2e}
\def\BibTeX{{\rm B\kern-.05em{\sc i\kern-.025em b}\kern-.08em T\kern-.1667em\lower.7ex\hbox{E}\kern-.125emX}}

\graphicspath{{figs/}}

\newtheorem{theorem}{Theorem}
\newtheorem{proposition}{Proposition}
\newtheorem{corollary}{Corollary}
\theoremstyle{definition}
\newtheorem{definition}{Definition}

\newcommand{\E}{\mathbb{E}}
\newcommand{\Prob}{\mathbb{P}}
\newcommand{\Var}{\mathrm{Var}}
\newcommand{\Cov}{\mathrm{Cov}}
\newcommand{\Corr}{\mathrm{Corr}}
\newcommand{\ind}{\mathbf{1}}
\newcommand{\R}{\mathbb{R}}
\newcommand{\thetahat}{\hat\theta}
\newcommand{\cX}{\mathcal{X}}
\newcommand{\eps}{\varepsilon}
\newcommand{\mc}{\mathrm{MC}}
\newcommand{\cv}{\mathrm{CV}}
\newcommand{\tpr}{\tau}
\newcommand{\fpr}{\phi}
\newcommand{\precn}{\pi}
\newcommand{\Phibar}{\bar\Phi}

\begin{document}

\title{Learned Control Variates for Rare Grid-Outage Estimation at Electric Vehicle Charging Stations}

\author{\IEEEauthorblockN{Sulyman Iyanda}
\IEEEauthorblockA{\textit{Industrial and Systems Engineering Department} \\
\textit{King Fahd University of Petroleum and Minerals}\\
Dhahran, Saudi Arabia\\ sulymaniyanda@gmail.com}
\and
\IEEEauthorblockN{Mansur Arief}
\IEEEauthorblockA{\textit{Industrial and Systems Engineering Department} \\
\textit{King Fahd University of Petroleum and Minerals}\\
Dhahran, Saudi Arabia \\
mansur.arief@kfupm.edu.sa}
}

\maketitle

\begin{abstract}
Public charging of electric vehicles (EVs) adds a large and synchronized load to the distribution feeders. In most networks, it is rare for charging demand, weather, and local grid conditions to overload a feeder at the same time. When it does happen, the consequences can be severe: an overloaded feeder stresses its transformers and can trigger cascading outages across a city. Planners who design for high reliability therefore need to estimate very small outage probabilities. To that end, plain Monte Carlo estimation with an expensive grid simulator can require a prohibitively large number of runs for this task. To reduce the number of runs, we propose a control variate (CV) built from a learned classifier of the outage, trained on cheap observable features such as temperature and load forecasts. We show that for any binary learner, the variance reduction is an explicit function of its recall and false-positive rate. Furthermore, we show that control variate based on a Gaussian proxy of the load loses its effect as outages become rarer, while a learner with non-vanishing precision keeps removing a fixed fraction of the variance even in the extreme rare-event regime. We verify these results in numerical experiments using a feeder-overload simulator across three orders of magnitude of outage probability and obtain that even weak learner with a single feature can remove a quarter of the variance, while a boosted-tree learner is able to remove about 60\%. We then apply the estimator to the feeder network of Surabaya, Indonesia, in which the Gaussian load model leaves every feeder three to seven times the one-day-in-ten-years blackout target. At the same simulation budget, planning the reserve with the learned CV and a 95\% confidence limit reduces the city-wide blackout days by a third compared with Monte Carlo point estimates. These results show that learned control variates can help planners design distribution grids that stay reliable against rare outages.
\end{abstract}

\begin{IEEEkeywords}
electric vehicle charging, grid reliability, rare-event simulation, control variates, variance reduction, machine learning
\end{IEEEkeywords}

\section{Introduction}
\label{sec:intro}

Electric vehicle (EV) charging is becoming one of the largest new loads on distribution networks. Uncoordinated charging raises the peak demand and could push feeders and transformers beyond their ratings on the few days when charging coincides with a high ambient temperature, a demand forecast miss, or a storm \cite{muratori2018impact,clementnyns2010impact,powell2022charging,nutkani2024impact}. Hosting-capacity studies of distribution networks with EV charging stations show that the binding constraint is the loading on a small number of such extreme days \cite{sun2025hosting}. In cities where the grid already operates with little reserve, one such day could result in a local outage, in which public charging stations are among the first loads to be shed or to lose supply \cite{lukuyu2024managing,khan2024impact}.

Traditionally, the planning literature on charging stations treats an outage as a service failure, in which vehicles arrive, the station is out of service, and revenue is lost \cite{akhra2023robust}. The consequences of a feeder overload, however, go beyond the station. An overloaded feeder that trips its protection moves its load onto the neighbouring feeders, while transformers that run hot age faster. A sequence of such local events is the mechanism through which large blackouts begin \cite{dobson2007complex,hines2009large}. For example, the August 2003 blackout in North America started from a small number of overloaded lines and cost several billion US dollars \cite{taskforce2004blackout}. Hence, for a candidate charging station, the grid operator needs to know the probability that its charging load, together with the other loads on the feeder, causes an overload on a given day. Reliability planning works with such probabilities at levels of $10^{-3}$ to $10^{-6}$ per period. The classical loss-of-load criterion of one day in ten years, for instance, corresponds to $2.7\times10^{-4}$ per day \cite{billinton1996reliability}.

Estimating such a small probability is a rare-event simulation problem. A feeder model with thermal limits, protection logic and stochastic EV arrivals is expensive to run. The Monte Carlo (MC) estimate using $n$ samples with a target probability $\theta$ has relative standard error $\sqrt{(1-\theta)/(n\theta)}$, which means a $10\%$ relative error at $95\%$ confidence requires about $400/\theta$ runs, i.e., four million runs at $\theta=10^{-4}$. Earlier work on robust charging-station design \cite{akhra2023robust} is able to reduce this cost with a control variate (CV) built from a Gaussian proxy of the electricity load. This construction, however, assumes that the modeler already knows a Gaussian variable that is strongly correlated with the outage. In practice however, it is often impractical to obtain such proxy. Furthermore, as we show in this paper, such a proxy loses its effect as outages become rarer.

To that end, we propose in this paper to replace the assumed proxy by a learned one. Cheap features that are observable before the expensive simulation is run, such as the temperature anomaly, the day-ahead load forecast and a storm flag, are often available. A classifier trained on a modest number of simulator runs can then predict, though imperfectly, whether the simulator will report an overload. We show that such a classifier only needs to be a weak learner \cite{schapire1990strength}, whose value as a control variate is determined by its recall and precision only. We also show that, with a suitable training design, the learner keeps its effect at every rarity level. 

In sum, our main contributions are as follows. \textbf{First}, we provide an exact variance formula for the CV estimator built from any binary learner, together with the rare-event limit $1-\mathrm{recall}\times\mathrm{precision}$ of its variance ratio (Theorem~\ref{thm:binary}, Corollary~\ref{cor:rare}). \textbf{Second}, we derive a ceiling on the variance reduction of any learner, which is set by the features, and a proof that a Gaussian proxy loses its effect as the outage becomes rarer (Theorem~\ref{thm:ceiling}, Proposition~\ref{prop:gauss}). \textbf{Third}, we show the exact variance of the estimator with an estimated coefficient and an unlabelled pool, the cost-optimal split between simulator runs and feature draws (Theorem~\ref{thm:pool}), and a training design that keeps the learner informative in the tail (Proposition~\ref{prop:design}). \textbf{Finally}, we demonstrate using a simulation study that compares plain MC, the Gaussian proxy, a weak learner, a strong learner and the Bayes-posterior ceiling at outage probabilities from $10^{-3}$ to $10^{-6}$ based on a case study on the Surabaya network \cite{akhra2023robust}, which quantifies how the load model and the precision of the estimate change the reserve on each feeder and the blackout days the city would experience.

\section{Related Work}
\label{sec:related}

\subsection{EV Charging Load and Grid Outages}
The effect of EV charging on distribution networks has been well documented. Uncoordinated residential charging raises the evening peak and the loading of distribution transformers \cite{muratori2018impact,clementnyns2010impact}. Furthermore, the choice of where and when vehicles charge could change the grid impact of a given fleet by a large factor \cite{powell2022charging}. Reviews of the distribution-level impacts list voltage drops, transformer ageing, losses and protection trips as the main consequences of EV charging, with demand management, storage and network reinforcement as the main remedies \cite{nutkani2024impact,khan2024impact}. Hosting-capacity studies, including recent work which shows that the capacity of a feeder for PV and charging stations is set by the loading on a small number of extreme days \cite{sun2025hosting}. Studies in African cities also show that a grid with little reserve shifts from occasional to frequent interruptions as charging grows \cite{lukuyu2024managing}. The consequences of a feeder overload, however, go beyond the feeder itself. The analysis of cascading failures and of the record of large blackouts in North America shows that a sequence of local overloads is the mechanism through which large blackouts begin \cite{dobson2007complex,hines2009large}, often at a large economic cost \cite{taskforce2004blackout,lacommare2006cost}. For this reason, reliability planning works with small outage probabilities per period \cite{billinton1996reliability}. These studies describe the mechanism and the stakes of an overload. However, they do not address how a utility could estimate a probability of order $10^{-3}$ to $10^{-6}$ for a specific feeder with a simulator that it can only run a limited number of times.

\subsection{Charging-Station Planning under Uncertainty}
Charging-infrastructure planning is usually formulated as a facility-location problem \cite{daskin2013network} that trades investment and operating cost against service quality. Early models were deterministic \cite{frade2011optimal,amilia2022designing}, while later work added stochastic demand \cite{hosseini2015refueling,yildiz2019urban}. A smaller number of studies considered disruptions such as floods \cite{zhang2023optimal} or outages mitigated by on-site storage \cite{hussain2020optimal}. The review in \cite{unterluggauer2022electric} calls for models that quantify randomness explicitly. In the facility-location literature on unreliable facilities \cite{snyder2005reliability}, the failure probabilities are treated as known inputs. Earlier work on robust charging-station design \cite{akhra2023robust} is, to our knowledge, the first to estimate station reliabilities by simulation inside the planning loop, using a Gaussian-proxy control variate to make this affordable. However, such a proxy must be known in advance. Furthermore, the Gaussian load model behind it is exactly the model that, as we show in this paper, misjudges the tail. In this paper, we leave the location model aside and focus on the estimation step, which all of these models take as given.

\subsection{Rare-Event Simulation and Variance Reduction}
Estimating a small probability by plain MC is inefficient, since the relative error grows as $1/\sqrt{n\theta}$ when $\theta$ decreases \cite{asmussen2007stochastic,heidelberger1995fast}. Two classical remedies are importance sampling and control variates. Importance sampling changes the sampling distribution, whereas control variates keep it and subtract a correlated correction with zero mean \cite{kahn1953methods,lavenberg1981perspective,nelson1990control,glasserman2004monte}. Control variates for probability estimation in particular were studied in \cite{hesterberg1998control}. Learned classifiers have also been used to build certifiable importance-sampling proposals for black-box rare events \cite{arief2021deep}. With a learned proposal, however, the quality of the estimate depends on how well the proposal covers the rare-event set, which has to be certified. A learned control variate, in contrast, keeps the estimator unbiased for any classifier. Several related ideas use a cheap model to correct an expensive one. Multi-fidelity MC uses a low-fidelity model as a control for a high-fidelity one and allocates the budget between the fidelities \cite{peherstorfer2016optimal,peherstorfer2018survey}. Prediction-powered inference combines a machine-learning predictor on a large unlabelled set with a small labelled set \cite{angelopoulos2023prediction,angelopoulos2023ppi}. Control functionals build controls with a known mean from the score of the sampling density \cite{oates2017control,south2023regularized}. The estimator in Theorem~\ref{thm:pool}(b) is algebraically the same as the two-fidelity estimator and the prediction-powered estimator. None of these works, however, considers the case in which the quantity of interest is a rare binary event. In particular, they do not show how the variance reduction depends on the classifier, whether it survives as the event becomes rarer, or how the classifier should be trained when a nominal training set contains no event at all.

\subsection{Outage Prediction and Weak Learners}
Data-driven outage prediction models use weather, load and infrastructure features to forecast where and how many outages a storm will cause \cite{guikema2014predicting,nateghi2014power}. Probabilistic forecasts of EV charging load and of ultra-short-term feeder load are also becoming common tools for operational reliability in smart grids \cite{hu2025evload,chen2025reliability}. These models are built and tuned for prediction, with the accuracy, the error or the area under the ROC curve as the criterion. In machine learning, a weak learner is a classifier whose balanced error is only slightly below one half. Boosting shows that such learners can be combined into a strong one \cite{schapire1990strength,kearns1994cryptographic}. In this paper, we use the notion for a different purpose. We show that a weak learner is already a useful control variate, whose value is set by its recall and precision on the rare days (Section~\ref{sec:theory}). These criteria differ from those used for prediction.


\section{Problem Setting and Estimators}
\label{sec:setting}

\subsection{Feeders, Charging Stations and Outages}
We consider a distribution feeder that supplies, among other loads, a public EV charging station. The feeder has a rating, i.e., the thermal limit of its conductors or of the transformer behind it, and a protection scheme that disconnects it when the load exceeds the rating for long enough. On an ordinary day, the peak loading stays well below the rating. However, in a few days,  several effects coincide: a high ambient temperature raises the air-conditioning load and reduces the capacity of the cables, the day-ahead forecast underestimates the demand, a storm shifts load onto the feeder, and many vehicles arrive at the station at once. The loading then reaches the rating and trips. We call such a day an \emph{outage day}.

An outage day has two kinds of cost. For the charging operator, the station is out of service and revenue is lost. For the grid operator, the tripped feeder moves its load onto its neighbors, whose transformers then run hot and age faster. If the neighboring feeders are themselves close to their ratings, the disturbance could propagate as a cascading failure \cite{dobson2007complex}. This second cost is borne by every customer on the affected feeders and is much larger than the first \cite{lacommare2006cost}. Hence, the approval of a charging station depends on the probability of an outage day on its feeder, which the grid operator requires to be small. For example, the criterion of one loss-of-load day in ten years \cite{billinton1996reliability} corresponds to $2.7\times10^{-4}$ per day. In general, the probabilities that a planner needs to certify may lie between $10^{-3}$ and $10^{-6}$.

The utility has three sources of information about a feeder. \textbf{First}, it knows the current rating and has an outage record, from which the current outage probability is known with reasonable accuracy. In the Surabaya data of \cite{akhra2023robust}, for instance, the reported reliabilities of the eleven feeders lie between $0.974$ and $0.983$. \textbf{Second}, it has cheap observable features for every past or simulated day, such as the temperature anomaly, the day-ahead load-forecast anomaly, a storm flag, the day of the week and the season. \textbf{Third}, it has a simulator of the feeder that takes the features of a day, draws its own internal randomness (vehicle arrivals, equipment state and protection behaviour), computes the loading and reports whether the feeder trips. One simulated day is expensive compared with one feature draw, since it involves a power-flow or thermal model and a stochastic arrival process. The planning question is which reserve, through a feeder upgrade or a cap on the charging load, brings the outage probability below the target. This work focuses on the estimation part of this question: how to estimate the outage probability of a feeder at a given reserve with as few simulator runs as possible, together with a reliable and meaningful confidence interval for such a critical task.

\subsection{Notation and Model}
Let $Y=\ind\{\text{outage day}\}$, and let $\theta=\Prob(Y=1)$ be the outage probability of the feeder at the reserve under study. The features of a day form a vector $X\in\cX$ with distribution $F_X$, which can be drawn at cost $c_X$. The simulator returns $Y$ given $X$ at cost $c_Y\gg c_X$. We write $\eta(x)=\Prob(Y=1\mid X=x)$ for the conditional outage probability given the features. We assume a simulator, which can return a continuous severity $S$ and the standardized peak loading of the day, hence $Y=\ind\{S\ge\kappa\}$ for a rating $\kappa$. A higher reserve thus corresponds to a higher $\kappa$. In the analysis that follows, we treats the simulator as a black box that simply returns $Y$.

Planners often associate the outage probability $\theta$ with the following: the expected number of outage days per year (i.e., $365\,\theta$), the probability of at least one outage day in a horizon of $T$ days (i.e., $1-(1-\theta)^T\approx T\theta$), and and the station reliability (i.e., $1-\theta$). All three are linear in $\theta$ to first order, hence a variance reduction obtained for $\theta$ apply to them directly.

\subsection{Monte Carlo and Control Variates}
With $n$ independent simulated days, $\thetahat^{\mc}=\bar Y_n$ is unbiased with $\Var(\thetahat^{\mc})=\theta(1-\theta)/n$. The number of runs required for a relative half-width $\eps$ at confidence $1-\alpha$ is
\begin{equation}
n_{\mc}(\theta,\eps,\alpha)=z_{1-\alpha/2}^2(1-\theta)/(\theta\,\eps^2),
\label{eq:nmc}
\end{equation}
which grows as $1/\theta$. For a $10\%$ relative error at $95\%$ confidence, this is about $400/\theta$ runs, i.e., $4\times10^5$ runs at $\theta=10^{-3}$ and $4\times10^8$ at $10^{-6}$. In other words, the planning has to observe about $400$ outage days in the Monte Carlo simulation to achieve this level of relative error. To that end, a control variate is a random variable $C$ that is simulated together with $Y$, is correlated with $Y$, and has a known mean $\gamma=\E[C]$. For any coefficient $\beta$, the estimator
\begin{equation}
\thetahat^{\cv}_\beta=\bar Y_n-\beta\,(\bar C_n-\gamma)
\label{eq:cv}
\end{equation}
is unbiased. Furthermore, with $\rho=\Corr(Y,C)$ and $\beta^\star=\Cov(Y,C)/\Var(C)$, its variance is \cite{lavenberg1981perspective,glasserman2004monte}
\begin{equation}
\Var(\thetahat^{\cv}_{\beta^\star})=(1-\rho^2)\,\Var(Y)/n .
\label{eq:cvvar}
\end{equation}
Thus, $\rho^2$ is the fraction of variance removed, $1/(1-\rho^2)$ is the factor by which the number of simulator runs can be reduced, and $\sqrt{(1-\rho^2)(1-\theta)/(n\theta)}$ is the relative standard error.

In an earlier study \cite{akhra2023robust}, researcher use the control variate $C=\ind\{P\ge t\}$ for a Gaussian proxy $P$ of the load, with $\gamma=\Phibar(t)=1-\Phi(t)$. In this paper, we use $C=g(X)$ for a learner $g$ trained on a separate set of simulator runs. Its mean $\gamma$ is either known, when $g$ thresholds a feature with a known distribution, or estimated from a pool of unlabelled feature draws, such as the features of every day in the utility's weather archive. Algorithm~\ref{alg:lcv} summarizes the procedure.

\begin{algorithm}[t]
\caption{Learned control variate (LCV)}
\label{alg:lcv}
\KwIn{feature law $F_X$ (cost $c_X$ per draw); simulator $Y\mid X$ (cost $c_Y$); training design $q$; sizes $n_{\mathrm{tr}}$, $n$, $m$; learner class $\mathcal G$}
Draw $n_{\mathrm{tr}}$ training features from $q$, run the simulator on them, and fit $g\in\mathcal G$ with importance weights $F_X/q$ where the fitting criterion requires them (Section~\ref{sec:design})\;
Draw $n$ fresh features from $F_X$, run the simulator, obtain $Y_1,\dots,Y_n$; set $C_l=g(X_l)$\;
\eIf{$\gamma=\E[g(X)]$ is known}{use $\gamma$\;}{draw $m$ further features (no simulator runs) and set $\hat\gamma=\frac{1}{n+m}\big(\sum_{l\le n}g(X_l)+\sum_{l\le m}g(X'_l)\big)$\;}
$\hat\beta\leftarrow\widehat{\Cov}(Y,C)/\widehat{\Var}(C)$ from the $n$ pairs; $\thetahat\leftarrow\bar Y_n-\hat\beta(\bar C_n-\gamma)$ (or $\hat\gamma$)\;
\KwOut{$\thetahat$ and its standard error from \eqref{eq:cvvar} or \eqref{eq:poolvar}.}
\end{algorithm}

\section{Analytical Results}
\label{sec:theory}

Throughout this section, $Y\in\{0,1\}$ with $\theta=\Prob(Y=1)\in(0,1)$, and $X\in\cX$ is the feature vector. 

\subsection{Binary Learners}

\begin{definition}
A binary learner is a measurable $g:\cX\to\{0,1\}$ with recall $\tpr=\Prob(g(X)=1\mid Y=1)$, false-positive rate $\fpr=\Prob(g(X)=1\mid Y=0)$, alarm rate $\gamma=\Prob(g(X)=1)=\theta\tpr+(1-\theta)\fpr$ and precision $\precn=\Prob(Y=1\mid g(X)=1)=\theta\tpr/\gamma$. Following \cite{kearns1994cryptographic,schapire1990strength}, $g$ is a \emph{weak learner} with edge $\eps>0$ if its balanced error rate $\tfrac12((1-\tpr)+\fpr)$ is at most $\tfrac12-\eps$, i.e., $\tpr-\fpr\ge2\eps$. Note that no requirement is placed on accuracy, calibration or the area under the ROC curve.
\end{definition}

With this definition, we have the following:

\begin{theorem}[Variance reduction from a binary learner]
\label{thm:binary}
For a binary learner with $0<\gamma<1$ and $C=g(X)$,
\begin{equation}
\Cov(Y,C)=\theta(1-\theta)(\tpr-\fpr),\quad
\rho^2=\frac{(\tpr-\fpr)^2\,\theta(1-\theta)}{\gamma(1-\gamma)}.
\label{eq:rho2}
\end{equation}
Hence, the CV estimator has a smaller variance than MC if and only if $\tpr\ne\fpr$. A weak learner with edge $\eps$ guarantees $\rho^2\ge4\eps^2\theta(1-\theta)/(\gamma(1-\gamma))>0$.
\end{theorem}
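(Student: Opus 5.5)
The plan is a direct moment computation, since both $Y$ and $C=g(X)$ are Bernoulli variables. First I will record the marginals: $\E[Y]=\theta$ with $\Var(Y)=\theta(1-\theta)$, and $\E[C]=\gamma$ with $\Var(C)=\gamma(1-\gamma)$. The assumption $0<\gamma<1$ is exactly what makes $\Var(C)>0$, so that $\rho$ and $\beta^\star$ are well defined.

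Next I will compute the joint moment $\E[YC]=\Prob(Y=1,g(X)=1)=\theta\tpr$ by conditioning on $Y=1$. Then
\[
\Cov(Y,C)=\theta\tpr-\theta\gamma=\theta\bigl(\tpr-\theta\tpr-(1-\theta)\fpr\bigr)=\theta(1-\theta)(\tpr-\fpr),
\]
where I substitute $\gamma=\theta\tpr+(1-\theta)\fpr$ from the Definition. Squaring and dividing by $\Var(Y)\Var(C)$ gives
\[
\rho^2=\frac{\theta^2(1-\theta)^2(\tpr-\fpr)^2}{\theta(1-\theta)\gamma(1-\gamma)},
\]
which simplifies to \eqref{eq:rho2}.

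For the ``if and only if'' claim I will invoke \eqref{eq:cvvar}. The CV variance equals $(1-\rho^2)\Var(Y)/n$, and $\Var(Y)>0$ because $\theta\in(0,1)$. So the CV variance is strictly smaller than $\theta(1-\theta)/n$ exactly when $\rho^2>0$. By \eqref{eq:rho2}, and since all other factors are strictly positive, this holds exactly when $\tpr\ne\fpr$. I will note that the comparison is at the optimal coefficient $\beta^\star$; for $\tpr=\fpr$ we have $\beta^\star=0$, and the estimator coincides with MC.

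Finally, for the weak-learner bound: the edge condition gives $\tpr-\fpr\ge2\eps>0$, hence $(\tpr-\fpr)^2\ge4\eps^2$. Substituting this into \eqref{eq:rho2} gives the stated lower bound, which is strictly positive. The only point needing care is the bookkeeping of the positivity conditions ($0<\theta<1$, $0<\gamma<1$), which I will state explicitly; no step is a genuine obstacle.
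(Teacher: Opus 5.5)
Your proposal is correct and matches the paper's proof: compute $\E[YC]=\theta\tpr$, substitute $\gamma=\theta\tpr+(1-\theta)\fpr$ to get the covariance, then divide its square by $\theta(1-\theta)\gamma(1-\gamma)$. Your treatment of the ``if and only if'' claim via \eqref{eq:cvvar} at $\beta^\star$, and of the weak-learner bound, fills in steps the paper leaves implicit.
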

\begin{proof}
$\E[YC]=\Prob(Y=1,g(X)=1)=\theta\tpr$ and $\E[Y]\E[C]=\theta\gamma$, so $\Cov(Y,C)=\theta(\tpr-\gamma)=\theta(1-\theta)(\tpr-\fpr)$. Dividing its square by $\Var(Y)\Var(C)=\theta(1-\theta)\gamma(1-\gamma)$ gives $\rho^2$.
\end{proof}

Since $\rho^2$ depends on $(\tpr-\fpr)^2$, a learner that is worse than chance is as useful as one that is better by the same margin, and only the sign of $\beta^\star$ changes. This means, the only unusable learner is one whose alarms are independent of the outage. More interestingly, we have the following limit:

\begin{corollary}[Rare-event limit]
\label{cor:rare}
For $\tpr>0$, $\rho^2=\tpr\precn\,(1-\fpr/\tpr)^2(1-\theta)/(1-\gamma)$. If $\theta\to0$ while the precision stays bounded away from zero, then $\gamma,\fpr\to0$ and
\begin{equation}
\frac{\Var(\thetahat^{\cv}_{\beta^\star})}{\Var(\thetahat^{\mc})}\;\to\;1-\mathrm{recall}\times\mathrm{precision}.
\label{eq:rarelimit}
\end{equation}
If instead $\fpr$ is held fixed as $\theta\to0$, then $\rho^2=O(\theta)\to0$.
\end{corollary}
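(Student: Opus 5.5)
The plan is to start from the exact expression \eqref{eq:rho2} of Theorem~\ref{thm:binary} and rewrite it algebraically in terms of recall and precision. Write $\tpr-\fpr=\tpr(1-\fpr/\tpr)$, so that $(\tpr-\fpr)^2=\tpr^2(1-\fpr/\tpr)^2$. Substituting the definition $\gamma=\theta\tpr/\precn$ into the denominator $\gamma(1-\gamma)$ gives
\[
\rho^2=\frac{\tpr^2(1-\fpr/\tpr)^2\,\theta(1-\theta)\,\precn}{\theta\tpr\,(1-\gamma)}
=\tpr\precn\,(1-\fpr/\tpr)^2\,\frac{1-\theta}{1-\gamma},
\]
which is the first claim. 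This step needs only $\tpr>0$, so that $\precn$ is well defined and division by $\tpr$ is allowed, and $0<\gamma<1$, which is inherited from Theorem~\ref{thm:binary}.

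For the limit, I use \eqref{eq:cvvar}: the variance ratio equals $1-\rho^2$, so it suffices to show $\rho^2\to\tpr\precn$. Here $\tpr$ and $\precn$ may depend on $\theta$, so I will read \eqref{eq:rarelimit} as saying that the difference between the two sides tends to zero, or as a convergence along any subsequence on which $\tpr$ and $\precn$ converge. Since $\precn\ge c>0$, we get $\gamma=\theta\tpr/\precn\le\theta/c\to0$, and hence $(1-\theta)/(1-\gamma)\to1$. Next, $\gamma=\theta\tpr+(1-\theta)\fpr$ gives $\fpr\le\gamma/(1-\theta)\to0$.

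The only delicate point is the factor $(1-\fpr/\tpr)^2$, where I need $\fpr/\tpr\to0$ rather than just $\fpr\to0$. I handle it through the identity $(1-\theta)\fpr=\gamma-\theta\tpr=\theta\tpr(1/\precn-1)$, which yields
\[
\frac{\fpr}{\tpr}=\frac{\theta}{1-\theta}\Bigl(\frac1\precn-1\Bigr)\le\frac{\theta}{(1-\theta)c}\to0 .
\]
This step works uniformly and does not require $\tpr$ to stay away from zero. It shows that every factor except $\tpr\precn$ tends to one, and since $\tpr\precn\le1$ is bounded, $\rho^2-\tpr\precn\to0$. This establishes \eqref{eq:rarelimit}.

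For the final claim, fix $\fpr>0$ and let $\theta\to0$. Then $\gamma\ge(1-\theta)\fpr$ stays bounded away from $0$, and $\gamma\to\fpr<1$ keeps $1-\gamma$ bounded away from $0$ as well, as long as $\fpr<1$. The case $\fpr=1$ would make $\gamma\to1$, and I would set it aside as degenerate or handle it by symmetry. Using $(\tpr-\fpr)^2\le1$ in \eqref{eq:rho2} then gives
\[
\rho^2\le\frac{\theta(1-\theta)}{\gamma(1-\gamma)}=O(\theta)\to0 .
\]
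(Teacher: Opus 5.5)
Your proof is correct and follows the paper's route: substitute $\theta/\gamma=\precn/\tpr$ into \eqref{eq:rho2}, let the remaining factors tend to one, and treat the fixed-$\fpr$ case through $\gamma\to\fpr$; your identity $\fpr/\tpr=\frac{\theta}{1-\theta}\bigl(\frac{1}{\precn}-1\bigr)$ also supplies the step $\fpr/\tpr\to0$, which the paper leaves implicit when $\tpr$ may shrink with $\theta$. One small correction is needed: at $\fpr=1$ the fixed-$\fpr$ claim is false, since $1-\gamma=\theta(1-\tpr)$ gives $\rho^2=(1-\tpr)(1-\theta)/\gamma\to1-\tpr$, so that case must be excluded (as you rightly excluded $\fpr=0$) rather than ``handled by symmetry'', because the complement $1-g$ has $\fpr=0$ and precision one and therefore falls in the first regime, not the second.
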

\begin{proof}
Substitute $\theta/\gamma=\precn/\tpr$ into \eqref{eq:rho2}. Every factor other than $\tpr\precn$ tends to one when $\gamma,\fpr\to0$. The fixed-$\fpr$ case follows from \eqref{eq:rho2} with $\gamma\to\fpr$.
\end{proof}

Corollary~\ref{cor:rare} gives the operational meaning of a weak learner for rare events. Any weak learner reduces the variance. However, the reduction stays bounded away from zero as the outage becomes rarer only if the false-positive rate shrinks in proportion to $\theta$, i.e., only if the learner has non-vanishing precision. High recall is easy to obtain by predicting an outage every day, whereas precision has to be learned. Hence, a forecaster tuned to accuracy or ROC criteria is not tuned to the quantity that is meaningful for estimation.

\subsection{General Learners and the Ceiling Set by the Features}

With regard to more general learners and features, we have the following:
\begin{theorem}[Factorisation and ceiling]
\label{thm:ceiling}
Let $\eta(x)=\Prob(Y=1\mid X=x)$, and let $g:\cX\to\R$ have $0<\Var(g(X))<\infty$. Then
\begin{equation}
\rho^2(Y,g(X))=\rho^2(\eta(X),g(X))\cdot\rho^2_{\max}
\label{eq:factor}
\end{equation}
where $\rho^2_{\max}= \Var(\eta(X))/\theta(1-\theta)$.
Hence, $\rho^2(Y,g(X))\le\rho^2_{\max}$ for every learner, with equality if and only if $g(X)$ is almost surely an affine function of $\eta(X)$.
\end{theorem}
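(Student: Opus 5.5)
The factorisation follows from the tower property: $Y$ enters the covariance with any function of $X$ only through $\eta(X)$. The plan is first to show $\Cov(Y,g(X))=\Cov(\eta(X),g(X))$, then to rewrite $\rho^2(Y,g(X))$ by multiplying and dividing by $\Var(\eta(X))$, and finally to read off the ceiling and its equality case from the Cauchy--Schwarz inequality.

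\textbf{Step 1 (covariance identity).} Since $g(X)$ is $X$-measurable and $0\le\eta\le1$ with $\Var(g(X))<\infty$, all products involved are integrable. Conditioning on $X$ gives $\E[Y g(X)]=\E[\E[Y\mid X]\,g(X)]=\E[\eta(X)g(X)]$ and $\E[Y]=\E[\eta(X)]=\theta$. Hence $\Cov(Y,g(X))=\Cov(\eta(X),g(X))$. This generalises the computation of $\E[YC]$ in the proof of Theorem~\ref{thm:binary}.

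\textbf{Step 2 (factorisation).} Write
\[
\rho^2(Y,g(X))=\frac{\Cov(\eta(X),g(X))^2}{\Var(Y)\Var(g(X))}
=\frac{\Cov(\eta(X),g(X))^2}{\Var(\eta(X))\Var(g(X))}\cdot\frac{\Var(\eta(X))}{\theta(1-\theta)},
\]
using $\Var(Y)=\theta(1-\theta)$. This is \eqref{eq:factor} whenever $\Var(\eta(X))>0$. The degenerate case $\Var(\eta(X))=0$ needs separate handling: $\eta(X)=\theta$ a.s., so by Step 1 the covariance vanishes and both sides are zero, provided $\rho^2(\eta(X),g(X))$ is interpreted as $0$ (or as any finite value). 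I will state this convention explicitly, since it is the only step where the formula as written needs care.

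\textbf{Step 3 (ceiling and equality).} By Cauchy--Schwarz, $\rho^2(\eta(X),g(X))\le1$, which gives $\rho^2(Y,g(X))\le\rho^2_{\max}$. It also gives $\rho^2_{\max}\le1$, since $\Var(\eta(X))\le\Var(Y)$ by the law of total variance. Equality in Cauchy--Schwarz holds iff the centred variables are a.s.\ proportional, i.e., $g(X)=a+b\,\eta(X)$ a.s. Because $\Var(g(X))>0$, we must have $b\neq0$, which requires $\Var(\eta(X))>0$.

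The one point needing care is the degenerate case $\rho^2_{\max}=0$, where affine dependence on $\eta(X)$ would force $g$ to be constant, contradicting $\Var(g(X))>0$. Meanwhile the inequality $0\le0$ holds with equality for every $g$. I would note this edge case and restrict the ``if and only if'' to $\rho^2_{\max}>0$, which is the meaningful setting. Everything else is routine.
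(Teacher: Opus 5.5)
Your proposal is correct and follows the paper's proof: the tower property gives $\Cov(Y,g(X))=\Cov(\eta(X),g(X))$, you multiply and divide by $\Var(\eta(X))$, and Cauchy--Schwarz bounds the first factor by one. Your handling of the degenerate case $\Var(\eta(X))=0$, where $\rho^2(\eta(X),g(X))$ is undefined and the ``only if'' direction fails, is a valid refinement that the paper leaves implicit.
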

\begin{proof}
By the tower property, we have $\Cov(Y,g(X))=\Cov(\eta(X),g(X))$. Therefore, $$\rho^2(Y,g)=[\Cov(\eta,g)^2/(\Var(\eta)\Var(g))]\cdot[\Var(\eta)/\Var(Y)],$$ and Cauchy--Schwarz inequality bounds first factor by 1.
\end{proof}

Three consequences follow from this theorem. First, calibration does not matter, since an affine rescaling of a score leaves $\rho^2$ unchanged. Second, a thresholded score is a binary learner and is governed by Theorem~\ref{thm:binary}. Third, the ceiling is a property of the features. The information that a utility collects about each day therefore bounds what any learner, weak or strong, could give in terms of the estimation efficiency.

For Gaussian proxy, we have the following result:

\begin{proposition}[A Gaussian proxy loses its effect for rare events]
\label{prop:gauss}
Let $(S,P)$ be bivariate normal with standard margins and correlation $r\in(-1,1)$, $Y=\ind\{S\ge\kappa\}$ and $C_t=\ind\{P\ge t\}$. Then $\sup_{t}\rho^2(Y,C_t)\to0$ as $\kappa\to\infty$.
\end{proposition}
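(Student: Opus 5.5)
The plan is to avoid optimizing over the threshold $t$ directly and instead bound every $C_t$ at once by the ceiling of Theorem~\ref{thm:ceiling}. Since $C_t$ is a measurable function of the single feature $X=P$, Theorem~\ref{thm:ceiling} gives
\[
\sup_t \rho^2(Y,C_t)\;\le\;\rho^2_{\max}=\frac{\Var(\eta(P))}{\theta(1-\theta)},\qquad \eta(p)=\Prob(S\ge\kappa\mid P=p).
\]
This bound does not depend on $t$, so the supremum is handled for free. It then suffices to show that $\Var(\eta(P))/\theta\to0$ as $\kappa\to\infty$, where $\theta=\Phibar(\kappa)$ and $1-\theta\to1$. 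I will simply use $\Var(\eta(P))\le\E[\eta(P)^2]$.

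\textbf{Step 1 (rewriting the second moment as a joint tail).} I will realise $\E[\eta(P)^2]$ as a joint exceedance probability. Let $S_1,S_2$ be conditionally independent given $P$, each with the conditional law of $S$ given $P$. That is, $S_i=rP+\sqrt{1-r^2}\,Z_i$ with $Z_1,Z_2$ i.i.d.\ standard normal and independent of $P$. Then $\E[\eta(P)^2]=\Prob(S_1\ge\kappa,\,S_2\ge\kappa)$. The pair $(S_1,S_2)$ is bivariate normal with standard margins and correlation $r^2<1$, since $|r|<1$.

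\textbf{Step 2 (Gaussian tail independence).} This is the only real content. The claim is that $\Prob(S_1\ge\kappa,S_2\ge\kappa)/\Phibar(\kappa)\to0$ whenever the correlation $r^2$ is strictly below $1$. Rather than appealing to asymptotics of the bivariate tail, I will use the crude inclusion $\{S_1\ge\kappa,S_2\ge\kappa\}\subseteq\{S_1+S_2\ge2\kappa\}$. Since $S_1+S_2\sim N(0,2+2r^2)$, this gives
\[
\Prob(S_1\ge\kappa,S_2\ge\kappa)\le\Phibar(c\kappa),\qquad c=\frac{2}{\sqrt{2+2r^2}}>1.
\]
By the Mills-ratio asymptotics $\Phibar(x)\sim\varphi(x)/x$, the ratio $\Phibar(c\kappa)/\Phibar(\kappa)$ behaves like $c^{-1}e^{-(c^2-1)\kappa^2/2}$, which tends to $0$. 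Alternatively, the standard bounds $\frac{x}{1+x^2}\varphi(x)\le\Phibar(x)\le\varphi(x)/x$ give the same conclusion non-asymptotically.

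\textbf{Step 3 (conclusion).} Combining the previous steps,
\[
\sup_t\rho^2(Y,C_t)\le\frac{\Phibar(c\kappa)}{\Phibar(\kappa)(1-\Phibar(\kappa))}\to0.
\]
This also gives an explicit Gaussian-exponential rate in $\kappa$. The case $r=0$ is trivial, since then $\eta$ is constant. Negative $r$ needs no separate treatment, because only $r^2$ enters.

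I expect the main obstacle to be Step 2, and specifically making sure that the ratio, and not just the joint tail, vanishes. A direct analysis of $\eta(P)$ near the optimal threshold would be messy; the sum-of-pair trick is what I would try first because it reduces the problem to a one-dimensional Gaussian tail ratio. If that bound turned out to be too loose for some purpose, I would fall back on conditioning: given $S_1\ge\kappa$, one has $S_1\approx\kappa$, so $S_2$ concentrates near $r^2\kappa<\kappa$.
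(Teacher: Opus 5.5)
Your proof is correct, and it takes a genuinely different route from the paper's.

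The paper argues threshold by threshold through Theorem~\ref{thm:binary}. It bounds $\rho^2\le\tau\pi/((1-\theta)(1-\gamma))$ for a control with nonnegative covariance, reduces to $r\ge0$, and splits $t$ into three regimes. For $t\ge c\kappa$ with $c=(1+r)/2$, the recall vanishes. For $0\le t<c\kappa$, the precision is at most $\Phibar(\kappa)/\Phibar(c\kappa)$. For $t<0$, it passes to the complementary control.

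You instead bound every $C_t$ at once by the ceiling of Theorem~\ref{thm:ceiling} with $X=P$. You then write $\E[\eta(P)^2]$ as the joint tail of a Gaussian pair with correlation $r^2$, and reduce that to a one-dimensional Mills ratio via $\{S_1\ge\kappa,S_2\ge\kappa\}\subseteq\{S_1+S_2\ge2\kappa\}$.

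Your route gives a cleaner, fully rigorous argument. There is no case analysis in $t$, no sign reduction, and no heuristic conditioning on $S\approx\kappa$. It also avoids the awkward $t<0$ case: there, for $r>0$, the paper's complementary control has nonpositive covariance, which is the opposite of what its recall--precision bound assumes.

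It also proves a stronger statement: $\rho^2_{\max}$ for the feature $P$ itself vanishes. So no score $g(P)$ at all, including the Bayes posterior $\eta(P)$, survives the rare-event limit. Finally, it gives the explicit rate $e^{-(1-r^2)\kappa^2/(2(1+r^2))}$. For $r\neq0$ this exponent is the correct one for $\rho^2_{\max}$, since the sum inclusion loses only a polynomial factor.

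What the paper's route buys is interpretation. In the recall--precision language of Corollary~\ref{cor:rare}, it shows which of the two quantities collapses in each threshold regime. That is exactly the contrast the paper then draws with learned controls whose precision does not vanish.
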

\begin{proof}[Proof sketch]
For a binary control with nonnegative covariance, \eqref{eq:rho2} gives $\rho^2\le\tpr\precn/((1-\theta)(1-\gamma))$. Take $r\ge0$, since the case $r<0$ follows by replacing $P$ with $-P$, and let $c=(1+r)/2<1$. For $t\ge c\kappa$, the recall $\Prob(P\ge t\mid S\ge\kappa)$ vanishes as $\kappa\to\infty$. Indeed, conditionally on $S=s\approx\kappa$, the proxy is $N(rs,1-r^2)$, and $t-r\kappa\ge\kappa(1-r)/2$. For $0\le t<c\kappa$, the precision is at most $\Phibar(\kappa)/\Phibar(c\kappa)$, which tends to zero by the Mills ratio. For $t<0$, we use the complementary control $1-C_t$, whose recall vanishes in the same way. The three bounds are uniform in $t$.
\end{proof}

The mechanism behind Proposition~\ref{prop:gauss} is the asymptotic tail independence of the bivariate normal distribution \cite{sibuya1960bivariate}. The proxy and the load agree on moderate days but disagree on the extreme days that determine the outage. By Corollary~\ref{cor:rare}, a learner whose precision does not vanish keeps a constant fraction of the variance. Whether such a learner exists depends on the mechanism of the outage. If extreme loads are driven by observable extremes (heat waves, forecast misses, storms), the features stay informative in the tail. If they are driven by unobserved noise, no estimator using those features is helpful.

\subsection{Estimated Coefficient, Unlabelled Pool, and Budget}

To make our results practical, we extend our analysis to the estimated coefficient regime and report the following:

\begin{theorem}[Estimated coefficient and unlabelled pool]
\label{thm:pool}
Let the $n$ labelled pairs and the $m$ unlabelled feature draws be independent, with $g$ fixed, $\Var(C)>0$ and $\rho=\Corr(Y,g(X))$.
\begin{enumerate}
\item[(a)] With $\hat\beta$ estimated from the same $n$ pairs and $\gamma$ known, $\E[\thetahat]=\theta+O(1/n)$ and $\sqrt n(\thetahat-\theta)\to N(0,(1-\rho^2)\Var(Y))$.
\item[(b)] With $\hat\gamma$ as in Algorithm~\ref{alg:lcv}, $\thetahat_\beta$ is unbiased for every fixed $\beta$, and
\begin{equation}
\Var(\thetahat_{\beta^\star})=\frac{\Var(Y)}{n}\Big(1-\rho^2\frac{m}{n+m}\Big).
\label{eq:poolvar}
\end{equation}
\item[(c)] With budget $B=n(c_Y+c_X)+mc_X$, \eqref{eq:poolvar} is minimised at $(n+m)/n=\sqrt{c_Y\rho^2/(c_X(1-\rho^2))}$ whenever this value exceeds 1. The variance ratio to MC at the same budget is $\Var^\star/\Var^{\mc}=\big(\sqrt{(1-\rho^2)c_Y}+\sqrt{\rho^2c_X}\big)^2/(c_Y+c_X)$, which tends to $1-\rho^2$ as $c_X/c_Y\to0$.
\end{enumerate}
\end{theorem}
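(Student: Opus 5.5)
For part (a), I will use the standard delta-method argument for control variates. Write $\thetahat=\bar Y_n-\hat\beta(\bar C_n-\gamma)$ and decompose it as $\thetahat-\theta=(\bar Y_n-\theta)-\beta^\star(\bar C_n-\gamma)-(\hat\beta-\beta^\star)(\bar C_n-\gamma)$. The first two terms give exactly the fixed-$\beta^\star$ estimator. By the CLT its scaled version converges to $N(0,(1-\rho^2)\Var(Y))$, which is \eqref{eq:cvvar}. The last term is a product of two $O_p(n^{-1/2})$ quantities, since $\hat\beta\to\beta^\star$ a.s. by the SLLN and $\Var(C)>0$. It is therefore $o_p(n^{-1/2})$, and Slutsky gives the limit. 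For the bias, $\E[\thetahat]-\theta=-\E[(\hat\beta-\beta^\star)(\bar C_n-\gamma)]$. I will expand $\hat\beta$ to first order in the centred sample moments. The cross term is an expectation of a product of two centred sample means, hence $O(1/n)$. Boundedness of $Y,C$ when $g$ is bounded, or a moment condition otherwise, controls the remainder. Bounding this remainder rigorously, given that the denominator $\widehat{\Var}(C)$ can be near zero, is the one technical point. I would handle it by truncating on the event $\widehat{\Var}(C)\ge\Var(C)/2$, whose complement has exponentially small probability for bounded $C$.

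For part (b), unbiasedness is immediate. The labelled and unlabelled draws all have law $F_X$, so $\E[\hat\gamma]=\gamma$, and hence $\E[\bar C_n-\hat\gamma]=0$. For the variance, I will rewrite $\bar C_n-\hat\gamma=\frac{m}{n+m}(\bar C_n-\bar C'_m)$, where $\bar C'_m$ is the mean over the pool. Independence of the pool from the labelled pairs then gives
\begin{equation*}
\Var(\thetahat_\beta)=\frac{\Var(Y)}{n}-2\beta\frac{m}{n+m}\frac{\Cov(Y,C)}{n}+\beta^2\Big(\frac{m}{n+m}\Big)^2\Var(C)\Big(\frac1n+\frac1m\Big).
\end{equation*}
Since $(\frac1n+\frac1m)\frac{m}{n+m}=\frac1n$, the last term equals $\beta^2\frac{m}{n+m}\Var(C)/n$. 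Minimising over $\beta$ gives the optimiser $\beta^\star=\Cov(Y,C)/\Var(C)$, unchanged from before, and the minimum value is \eqref{eq:poolvar}. Here $\beta^\star$ means the minimiser of this quadratic, which coincides with the earlier coefficient.

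For part (c), I will set $N=n+m$ and $\lambda=N/n\ge1$. The budget becomes $B=nc_Y+Nc_X=n(c_Y+\lambda c_X)$. Writing $s^2=\Var(Y)$, we have $\Var=\frac{s^2}{n}(1-\rho^2+\rho^2/\lambda)=\frac{s^2}{B}(c_Y+\lambda c_X)(1-\rho^2+\rho^2/\lambda)$. Expanding the product gives $\mathrm{const}+\lambda c_X(1-\rho^2)+c_Y\rho^2/\lambda$. This is convex in $\lambda$ with minimiser $\lambda^2=c_Y\rho^2/(c_X(1-\rho^2))$, which is admissible when it exceeds 1. Substituting back yields $\frac{s^2}{B}(\sqrt{(1-\rho^2)c_Y}+\sqrt{\rho^2c_X})^2$. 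Under the same budget, MC either spends $c_Y+c_X$ per run, matching the paper's cost accounting, so $\Var^{\mc}=s^2(c_Y+c_X)/B$. The stated ratio follows, and it tends to $1-\rho^2$ as $c_X/c_Y\to0$. Overall, the main obstacle is the bias bound in (a); (b) and (c) are exact algebra.
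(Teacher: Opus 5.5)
Your proposal is correct and follows the paper's route. For (b) and (c) you use the same decomposition $\bar C_n-\hat\gamma=\frac{m}{n+m}(\bar C_n-\bar C'_m)$ and the same reparametrisation $s=(n+m)/n$ with $f(s)=(c_Y+sc_X)(1-\rho^2+\rho^2/s)$. For (a) the paper only cites the classical estimated-coefficient result, and your Slutsky and bias-expansion argument is that classical argument written out.

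One small remark on the Slutsky step: you do not need $\hat\beta-\beta^\star=O_p(n^{-1/2})$, and the SLLN alone does not give it. Consistency of $\hat\beta$ together with $\bar C_n-\gamma=O_p(n^{-1/2})$ already gives the $o_p(n^{-1/2})$ bound.
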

\begin{proof}[Proof sketch]
(a) is the classical result for control variates with estimated coefficients \cite{lavenberg1981perspective,nelson1990control}. (b) Write $\bar C_n-\hat\gamma=m/(n+m) \cdot (\bar C_n-\bar C'_m)$, where the pool mean $\bar C'_m$ is independent of the pairs. Then expand the variance and substitute $\beta^\star$. (c) With $s=(n+m)/n$, the variance equals $\Var(Y)f(s)/B$ with $f(s)=(c_Y+sc_X)(1-\rho^2+\rho^2/s)$. This function is convex and is minimised at the stated value.
\end{proof}

Part (b) coincides with the two-fidelity estimator \cite{peherstorfer2016optimal} and with prediction-powered inference with a tuned power parameter \cite{angelopoulos2023ppi}. Combined with Theorems~\ref{thm:binary} and~\ref{thm:ceiling}, it gives the variance in closed form for a rare binary outcome. In practice, the pool is the utility's weather archive, on which the control is evaluated at no simulation cost. Part (c) then gives the number of archive days that is worth one simulator run.

\subsection{Training Design}
\label{sec:design}

Finally, we address the training nature of the learned CVs. In the rare regime, a learner trained on features drawn from the nominal distribution $F_X$ sees almost no outage and almost no extreme day. The following proposition shows that the training features may instead be drawn from any design.

\begin{proposition}[Validity under any training design]
\label{prop:design}
Let $g$ be any function of a training sample whose features are drawn from a distribution $q$ on $\cX$ and labelled by the simulator, independently of the $n$ estimation pairs drawn from $F_X$. Then, conditionally on $g$, Theorem~\ref{thm:pool} applies to the estimator of Algorithm~\ref{alg:lcv}, with $\rho^2$ computed under $F_X$. Furthermore, if $F_X\ll q$ with weights $w=dF_X/dq$, the weighted training-sample ratios $\hat\tpr=\sum_lw_lY_lg(X_l)/\sum_lw_lY_l$ and $\hat\precn=\sum_lw_lY_lg(X_l)/\sum_lw_lg(X_l)$ are consistent for the nominal recall and precision. Hence, $\hat\tpr\hat\precn$ is a consistent estimate of the limit \eqref{eq:rarelimit}.
\end{proposition}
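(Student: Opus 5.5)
The proof has two independent parts: a conditioning argument for validity, and a law-of-large-numbers argument for the weighted ratios.

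\textbf{Validity.} Let $\mathcal T$ denote the training sample. Because $\mathcal T$ is independent of the $n$ estimation pairs and of the $m$ pool draws, conditioning on $\mathcal T$ (equivalently on $g$) leaves the joint law of $\{(X_l,Y_l)\}_{l\le n}$ and $\{X'_l\}_{l\le m}$ unchanged. These remain i.i.d.\ from $F_X$ with $Y\mid X$ given by the simulator. Conditionally, $g$ is a fixed measurable function, so every hypothesis of Theorem~\ref{thm:pool} holds. The quantities $\rho=\Corr(Y,g(X))$, $\gamma=\E[g(X)]$ and $\Var(C)$ are those of the fixed $g$ under $F_X$. In particular, they do not depend on $q$, which only affected which $g$ was produced. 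Conditional unbiasedness in part (b) then gives unconditional unbiasedness by the tower property. The one thing to state explicitly is that the condition $\Var(C)>0$ must hold for the realized $g$; if it fails, the estimator reduces to plain MC.

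\textbf{Consistency of $\hat\tpr$ and $\hat\precn$.} Here I would hold $g$ fixed and evaluate it on an independent held-out training sample (or apply the argument conditionally on $g$), so that the summands are i.i.d. For $X_l\sim q$ and $Y_l\mid X_l$ from the simulator, the change of measure $w=dF_X/dq$ gives
\[
\E_q[wYg(X)]=\E_{F_X}[Yg(X)]=\theta\tpr,\qquad \E_q[wY]=\theta,\qquad \E_q[wg(X)]=\gamma .
\]
The conditional law of $Y$ given $X$ is the same under both designs, so only the marginal of $X$ is reweighted. These expectations are finite because $w\ge0$, the integrands are bounded by $w$, and $\E_q[w]=1$. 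Dividing the numerator and denominator of each ratio by the sample size, the strong law of large numbers gives almost-sure convergence to $\theta\tpr$, $\theta$ and $\gamma$. The continuous mapping theorem applies since $\theta>0$ and $\gamma>0$, and yields $\hat\tpr\to\theta\tpr/\theta=\tpr$ and $\hat\precn\to\theta\tpr/\gamma=\precn$. A product of consistent estimators is consistent, so $\hat\tpr\hat\precn\to\tpr\precn$. By Corollary~\ref{cor:rare}, $1-\tpr\precn$ is the limit in \eqref{eq:rarelimit}.

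\textbf{Main obstacle.} The delicate step is that the consistency claim is stated on the training sample itself, while $g$ is fitted on that same sample. Then the summands $w_lY_lg(X_l)$ are not i.i.d., and in-sample recall and precision are optimistically biased. I would first handle this with sample splitting, computing $\hat\tpr$ and $\hat\precn$ on a held-out portion drawn from $q$, which restores the i.i.d.\ argument above. Alternatively, if $g$ ranges over a Glivenko--Cantelli class $\mathcal G$, I would use a uniform law of large numbers over $\mathcal G$ for the weighted empirical means. This needs an envelope condition such as $\E_q[w]<\infty$, which holds automatically. With either route, the ratio argument then goes through uniformly over $g\in\mathcal G$.
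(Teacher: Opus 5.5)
Your proof follows the paper's. Conditioning on the independent training sample makes $g$ a fixed function, so Theorem~\ref{thm:pool} applies with $\rho^2$ computed under $F_X$, and the weighted ratios are importance-sampling estimates; your change-of-measure, strong-law and continuous-mapping steps spell this out.

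Your caveat is also correct and is a point the paper's one-line proof skips: ratios computed on the same sample used to fit $g$ need sample splitting or a uniform law of large numbers. For example, a learner that memorises the training outages has in-sample $\hat\tau=1$ but nominal recall zero under a continuous $F_X$. On the uniform-law route, the precision ratio also needs $\gamma(g)$ bounded away from zero over the class.
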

\begin{proof}
Conditionally on $g$, the estimation pairs are i.i.d.\ from the nominal distribution and $g$ is fixed, so the classical results apply. The weighted ratios are importance-sampling estimates \cite{kahn1953methods,shimodaira2000improving}.
\end{proof}

In practice, the $n_{\mathrm{tr}}$ expensive training runs are placed where the outages are, by drawing the training features from a shifted design that over-represents hot days and high forecasts. Since the simulator returns the severity $S$, the label $\ind\{S\ge\kappa\}$ can be formed afterwards for any rating $\kappa$. One training set therefore serves every rarity level. Score learners are fitted to $S$ by regression, while the threshold of a binary learner is chosen to maximise $\hat\tpr\hat\precn$. For a utility, the simulation planning used to train the control is thus a one-time investment, which can be reused when the ratings, the reliability targets or the stations under study change, which further emphasize the practical value of our proposed framework.

\section{Numerical Experiments}
\label{sec:experiments}

\subsection{Simulator and Experiment Setup}

Each simulated day has a temperature anomaly $x_1\sim N(0,1)$, a day-ahead load-forecast anomaly $x_2\sim N(0,1)$ and a storm flag $x_3\sim\mathrm{Bernoulli}(0.15)$. The simulator returns the standardised peak feeder loading
\begin{align}
S &=a_1x_1+a_2\max(x_1,0)^2+a_3x_2\nonumber \\& +a_4x_3\max(\eps_2,0) +a_0\eps_1,
\label{eq:sim}
\end{align}
with internal noise $\eps_1,\eps_2\sim N(0,1)$, and $Y=\ind\{S\ge\kappa\}$. The coefficients $(a_1,a_2,a_3,a_4,a_0)=(0.45,0.20,0.55,0.40,0.30)$ represent a nonlinear heat effect, a linear forecast effect, a storm effect that only sometimes materialises, and unobserved noise. The forecast anomaly thus acts as a Gaussian proxy of the load, with latent correlation $0.61$. The rating $\kappa$ is calibrated by quadrature so that $\theta\in\{10^{-3},10^{-4},10^{-5},10^{-6}\}$ exactly. Since the noise structure is known, $\eta(x)$ is also available by quadrature, and every $\rho^2$ is evaluated exactly by importance sampling over the features. It is worth noting that our estimators never use this information.

We compare plain MC with five controls: (1) the Gaussian proxy (GP) $\ind\{x_2\ge t\}$, with $t$ chosen to maximise its exact $\rho^2$; (2) a stump (ST) $\ind\{x_1\ge t\}$, i.e., a one-feature weak learner with $t$ chosen by the weighted $\hat\tpr\hat\precn$ criterion; (3) a deliberately misspecified linear learner (LL), i.e., least squares of $S$ on $X$, with score $\Phibar((\kappa-\hat s(x))/\hat\sigma)$; (4) gradient-boosted trees (BT) regressing $S$ on $X$ \cite{pedregosa2011scikit}, a standard strong learner for tabular data, with the same score; and (5) the Bayes posterior (BP) $\eta(X)$, which attains the ceiling. The learners are trained once on $n_{\mathrm{tr}}=10^4$ runs from the tail-focused design $x_1\sim N(2,1.5^2)$, $x_2\sim N(1,1.5^2)$, and are reused at every $\theta$. For comparison, they are also trained on $10^4$ nominal runs. The control means are computed once by importance sampling (relative standard error below $0.3\%$).

\subsection{Estimation Results}

\begin{figure*}[t]
\centering
\includegraphics[width=\textwidth]{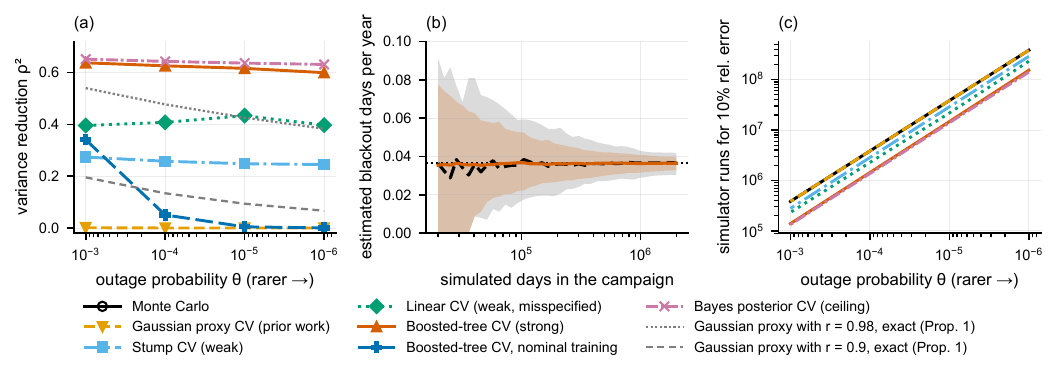}
\caption{(a) Variance reduction $\rho^2$ of each control as the outage becomes rarer. (b) Running estimate at $\theta=10^{-4}$ as the sample grows: median (line) and 95\% range (shaded) over 400 samples, for Monte Carlo (black, dashed) and the boosted-tree control on the same days (solid); The true value is shown as the dotted line. (c) Simulator runs needed to achieve a 10\% relative error.}
\label{fig:ablation}
\end{figure*}

\begin{table}[t]
\centering
\caption{Replication study at about 20 outages per replication}
\label{tab:ablation}
\scriptsize
\setlength{\tabcolsep}{4pt}
\begin{tabular*}{\columnwidth}{@{\extracolsep{\fill}}llcccccc@{}}
\toprule
$\theta$ & Estimator & $\rho^2$ & $\frac{1}{1-\rho^2}$ & RMSE & theory & bias & cov.\\
\midrule
$10^{-3}$ & MC & 0.000 & 1.00 & 0.233 & 0.223 & +0.015 & 0.94\\
 & GP & 0.001 & 1.00 & 0.233 & 0.223 & +0.015 & 0.94\\
 & ST (weak) & 0.274 & 1.38 & 0.191 & 0.190 & +0.008 & 0.95\\
 & LL (weak) & 0.396 & 1.66 & 0.175 & 0.174 & +0.005 & 0.94\\
 & BT & 0.638 & 2.76 & 0.135 & 0.135 & +0.008 & 0.95\\
 & BP & 0.652 & 2.87 & 0.134 & 0.132 & +0.010 & 0.95\\
\midrule
$10^{-4}$ & MC & 0.000 & 1.00 & 0.214 & 0.224 & +0.008 & 0.96\\
 & GP & 0.000 & 1.00 & 0.214 & 0.224 & +0.008 & 0.96\\
 & ST (weak) & 0.258 & 1.35 & 0.189 & 0.193 & -0.001 & 0.94\\
 & LL (weak) & 0.408 & 1.69 & 0.181 & 0.172 & -0.004 & 0.92\\
 & BT & 0.626 & 2.67 & 0.141 & 0.137 & -0.012 & 0.94\\
 & BP & 0.643 & 2.80 & 0.136 & 0.134 & -0.010 & 0.93\\
\midrule
$10^{-5}$ & MC & 0.000 & 1.00 & 0.240 & 0.224 & +0.037 & 0.93\\
 & GP & 0.000 & 1.00 & 0.240 & 0.224 & +0.037 & 0.93\\
 & ST (weak) & 0.248 & 1.33 & 0.200 & 0.194 & +0.036 & 0.95\\
 & LL (weak) & 0.434 & 1.77 & 0.165 & 0.168 & +0.007 & 0.94\\
 & BT & 0.616 & 2.61 & 0.125 & 0.139 & +0.009 & 0.97\\
 & BP & 0.637 & 2.75 & 0.128 & 0.135 & +0.012 & 0.96\\
\midrule
$10^{-6}$ & MC & 0.000 & 1.00 & 0.235 & 0.224 & +0.083 & 0.98\\
 & GP & 0.000 & 1.00 & 0.235 & 0.224 & +0.082 & 0.98\\
 & ST (weak) & 0.245 & 1.32 & 0.176 & 0.194 & +0.068 & 0.97\\
 & LL (weak) & 0.397 & 1.66 & 0.178 & 0.174 & +0.037 & 0.95\\
 & BT & 0.600 & 2.50 & 0.131 & 0.141 & +0.025 & 0.95\\
 & BP & 0.631 & 2.71 & 0.126 & 0.136 & +0.017 & 0.95\\
\bottomrule
\end{tabular*}

\end{table}

Fig.~\ref{fig:ablation}(a) and Table~\ref{tab:ablation} present the main result. The Gaussian proxy removes less than $0.2\%$ of the variance at $\theta=10^{-3}$ and nothing beyond. Even a hypothetical proxy with latent correlation $0.98$ decays from $0.54$ to $0.38$ over the range (grey curves), as predicted by Proposition~\ref{prop:gauss}. The learned controls, in contrast, keep their effect at every level. The stump keeps $\rho^2\approx0.25$ ($1.3$ times fewer runs), and the misspecified linear learner keeps about $0.40$ ($1.7$ times). The boosted trees keep $0.60$ to $0.64$ ($2.5$ to $2.8$ times), which is within $5\%$ of the ceiling of $0.63$ to $0.65$ set by the features. Table~\ref{tab:ablation} repeats each estimator over 400, 300, 120 and 60 replications at the four rarity levels, with $n=20/\theta$ simulated days per replication, i.e., about 20 outages. For each estimator, it reports the exact $\rho^2$, the equivalent-runs multiplier $1/(1-\rho^2)$, the empirical relative RMSE and its theoretical value $\sqrt{(1-\rho^2)(1-\theta)/(n\theta)}$ from \eqref{eq:cvvar}, the relative bias, and the coverage of the plug-in $95\%$ interval. In Table~\ref{tab:ablation}, every empirical relative RMSE is within two standard errors of its theoretical value, e.g., $0.21$ to $0.24$ for MC against $0.224$. The boosted-tree control reduces it to $0.13$ to $0.14$, while the $95\%$ intervals keep a coverage between $0.92$ and $0.98$. At $\theta=10^{-6}$, a $10\%$ relative error requires $3.8\times10^8$ runs with MC and $1.5\times10^8$ with the boosted-tree control, for a one-time training cost of $1.2\times10^4$ runs.

Fig.~\ref{fig:ablation}(b) translates this result into more practical insights. At $\theta=10^{-4}$, i.e., $0.037$ expected blackout days per year, the $95\%$ range of the MC estimate over 400 plannings is still $\pm0.021$ days per year after $10^5$ simulated days and $\pm0.007$ after $10^6$. The control variate reaches the same ranges after about $5\times10^4$ and $4\times10^5$ days, and neither estimator shows a bias. Very short replications are the exception. With only about two observed outages, $\hat\beta$ is fitted to a handful of positive days, so the control may increase the variance and its plug-in interval undercovers. From about twenty outages onward, the coverage is close to nominal.

Furthermore, it appears That the training design plays crucial role as much as the learner. The same boosted learner trained on $10^4$ nominal runs has $\rho^2=0.34$ at $\theta=10^{-3}$, $0.05$ at $10^{-4}$ and zero below, since its training set contains no extreme day and tree ensembles extrapolate as a constant. The tail-focused design, in contrast, leaves hundreds of outage days in the training set even at $\theta=10^{-6}$. Finally, with the control mean taken from a pool of $m=10n$ unlabelled draws, the loss relative to a known mean is about a tenth of $\rho^2$, as \eqref{eq:poolvar} predicts.

\subsection{Case Study: Blackout Map of Surabaya}
\label{sec:case}

We now apply the estimator to a planning decision on the Surabaya network \cite{akhra2023robust}, which has 11 candidate feeders and 31 sub-districts. For each feeder $j$, the earlier work reports a masked daily peak load with mean $\mu_j$ and standard deviation $\sigma_j$, a rating $z_j$, and a reliability $p_j$ between $0.974$ and $0.983$ at the current headroom $h_{0j}=(z_j-\mu_j)/\sigma_j\approx2$. We place a charging station on every feeder. The planner then chooses the headroom $h_j$, through a feeder upgrade or a cap on the charging load, so that the daily overload probability $\theta_j(h_j)$ meets the one-day-in-ten-years target $\theta^\star=2.7\times10^{-4}$. The daily peak load is modelled as $\mu_j+\sigma_j\tilde S_j$, where $\tilde S_j$ is the severity \eqref{eq:sim} standardised so that $\Prob(\tilde S_j\ge h_{0j})=1-p_j$. The simulator therefore reproduces the reported reliability of every feeder and differs from the Gaussian model only in the tail. 

We compare three design procedures: (a) the Gaussian load model, which requires $h=\Phibar^{-1}(\theta^\star)=3.46$ for every feeder; (b) imprecise estimates, i.e., plain MC with $n=10^4$ simulated days per feeder (the largest sample of the earlier work) and the reserve sized to the point estimate; and (c) precise estimates, i.e., the boosted-tree control variate on the same days, with the reserve sized to the smallest headroom whose $95\%$ upper confidence limit meets the target. Since the simulator returns the severity, one replication gives the whole curve $\theta_j(h)$.

\begin{figure*}[t]
\centering
\includegraphics[width=\textwidth]{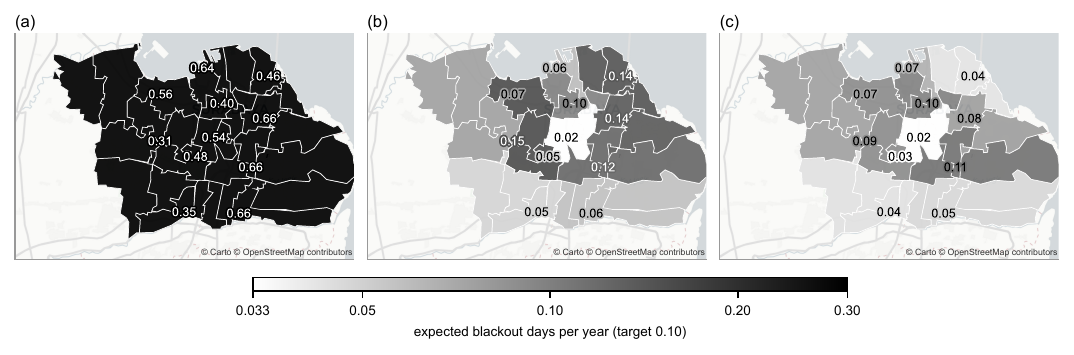}
\caption{Expected blackout days per year after a charging station is added and the reserve is sized for one day in ten years by (a) the Gaussian load model of the earlier work, (b) Monte Carlo with $10^4$ days and the point estimate, and (c) the learned control variate on the same days with a 95\% upper limit.}
\label{fig:map}
\end{figure*}

Fig.~\ref{fig:map} shows the true expected blackout days per year of each design. Table~\ref{tab:map} summarises 200 replications per feeder by the city total of feeder-blackout days per year, the share of feeders whose design ends above the target and above twice the target, the 90th percentile of the blackout days per year of a feeder (averaged over the feeders), and the share of the population served by a feeder above the target. Under the Gaussian model, every feeder is under-designed. The required headroom is $4.05$ to $4.46$, and the true blackout rate is $0.31$ to $0.66$ days per year, i.e., three to seven times the target, for a city total of $5.7$ feeder-blackout days per year. This is the design consequence of Proposition~\ref{prop:gauss}: the Gaussian model agrees with the simulator at the current operating point but underestimates the reserve needed in the tail. With imprecise estimates (panel (b)), $46\%$ of the feeders end above the target and $8\%$ above twice the target. Moreover, the planner cannot tell which feeders these are. In the example shown, for instance, some districts (Darmo Permai, Kenjeran and Rayon Ploso) are at $0.14$ to $0.15$ days per year, while other district (Embong Wungu) is over-built at $0.02$. With precise estimates (panel (c)), $15\%$ of the feeders remain above the target. The city total falls from $1.17$ to $0.77$ feeder-blackout days per year, while the share of the population on a feeder above the target falls from $46\%$ to $15\%$. Table~\ref{tab:map} also breaks down the contribution the two components. The confidence limit protects the design, since MC with the same limit reaches $0.78$ city days. The control variate then reduces the remaining failures, from $24\%$ to $15\%$ of the feeders above the target, and the 90th percentile from $0.14$ to $0.11$ days per year.

\begin{table}[t]
\centering
\caption{City-wide outcome over 200 replications of $10{,}000$ days}
\label{tab:map}
\scriptsize
\setlength{\tabcolsep}{4pt}
\begin{tabular*}{\columnwidth}{@{\extracolsep{\fill}}lccccc@{}}
\toprule
Procedure & city days/yr & $>$target & $>2\times$ & 90th pct. & pop.\\
\midrule
Gaussian model & 5.73 & 1.00 & 1.00 & 0.521 & 1.00\\
MC, point est. & 1.17 & 0.46 & 0.08 & 0.192 & 0.46\\
CV, point est. & 1.16 & 0.49 & 0.02 & 0.162 & 0.49\\
MC, 95\% limit & 0.78 & 0.24 & 0.02 & 0.139 & 0.23\\
CV, 95\% limit & 0.77 & 0.15 & 0.00 & 0.110 & 0.15\\
\bottomrule
\end{tabular*}

\end{table}
The cost side of the planners is illustrated on the largest feeder (Embong Wungu, with $\mu=28{,}979$, $\sigma=3{,}622$, rating $36{,}224$). The exact requirement is an upgrade of $8{,}415$ units ($29\%$ of the base load), whereas the Gaussian model asks for $5{,}274$ ($18\%$). Under the confidence-limit rule, MC over-designs the feeder by $1{,}420$, $675$ and $330$ units on average for samples of $2\times10^4$, $10^5$ and $5\times10^5$ days, against $816$, $459$ and $240$ units for the control variate. The control-variate sample thus matches the decision quality of an MC sample about two to three times larger, close to the factor $1/(1-\rho^2)\approx2.7$ from \eqref{eq:cvvar}. Both estimators keep the true outage probability below the target in at least $93\%$ of the samples.

\section{Discussion}
\label{sec:discussion}

\subsection{Implications for Grid Operators and City Planners}

The results lead to a handful of implications for a distribution utility or a city that plans its EV infrastructure.

First, reliability targets could be certified with far fewer simulations. Since the planning quantities are linear in $\theta$ (Section~\ref{sec:setting}), the variance reduction transfers to them unchanged. In the Surabaya case, the control variate reaches the decision quality of an MC sample two to three times larger. Combined with an upper confidence limit, it also reduces the city-wide blackout days by a third, and the population on an under-designed feeder from $46\%$ to $15\%$, compared with MC point estimates. Furthermore, the load model plays important role as much as the estimator. The Gaussian model of the earlier work reproduces the current outage rate of some districts (including Embong Wungu), yet it underestimates the required upgrade by about $3{,}100$ units. The resulting risk is more than five times the accepted level.

Next, investment in features should come before investment in models. By Theorem~\ref{thm:ceiling}, the information collected about each day bounds what any learner could achieve. In our simulator, halving or doubling the unobserved noise moved the ceiling from $0.64$ to $0.79$ or $0.38$ for every learner. Hence, feeder loading histories, maintenance schedules, distributed-generation output and the load forecasts that utilities already produce \cite{hu2025evload,chen2025reliability} could raise the ceiling, while a more elaborate model on the same features could not.

Furthermore, the training budget should be spent where the outages are. A training sample placed on hot, high-forecast days keeps its value at every reliability target and can be reused when the ratings or the targets change. A sample placed on typical days, in contrast, is wasted in the rare regime. Finally, the control depends only on the features of a day, so it can be evaluated on a decade of weather records at no simulation cost. Theorem~\ref{thm:pool}(c) then gives the number of such days that is worth one simulator run. Together, these points suggest a simple protocol for an operator: train once on a tail-focused sample, evaluate the control on the archive, size the reserve with an upper confidence limit, and commit only after the sample contains at least about twenty outages.

\subsection{Limitations and Future Work}

We also highlight a few limitations of this work and future work directions. First, we are limited by the use of a synthetic simulator and synthetic features, which are chosen so that the exact posterior is available for evaluation. In the Surabaya case, the simulator matches the reported reliability of every feeder, but the shape of its tail is assumed. A feeder model driven by real weather and outage records is therefore the natural next step. Furthermore, outages at different feeders are modelled independently, although a city-wide storm correlates them, which we ignore in this study. Finally, the tail-focused design was also chosen by hand. An adaptive design could alleviate this choice, which might allow us to combine our current learned control variate results with  learned importance-sampling proposal \cite{arief2021deep}, which we plan to explore in our future work.

\section{Conclusion}
\label{sec:conclusion}

In this paper, we have shown that a classifier of grid outages trained on cheap observable features yields a valid control variate, which reduces the variance for every weak learner. Its effect is governed by recall and precision through an exact formula and is bounded by a ceiling that the features alone determine. Unlike a Gaussian proxy of the load, it does not vanish as the outage becomes rarer, provided that the learner is trained where the outages are. On a feeder-overload simulator, the learned controls removed between a quarter and nearly two thirds of the Monte Carlo variance at outage probabilities from $10^{-3}$ to $10^{-6}$. On the Surabaya network, the Gaussian load model yields every feeder three to seven times the one-day-in-ten-years target, while point-estimate designs from plain Monte Carlo makes almost half of the feeders achieve above the target. At the same simulation budget, the learned control variate with a confidence limit also reduces the city-wide blackout days by a third. Compared with Monte Carlo under the same rule, it also frees up the over-design of the reserve by up to $40\%$. The proposed method thus makes the certification of rare overload probabilities affordable for a utility, at the cost of a one-time training sample and a pool of historical feature data, which opens huge potential to design a more reliable grid against rare-event outages.


\IfFileExists{IEEEtran.bst}{\bibliographystyle{IEEEtran}}{\bibliographystyle{ieeetr}}
\bibliography{refs}

@misc{akhra2023robust,
  title         = {A Robust Electric Vehicle Charging Station Optimization under Rare Event Disruptions},
  author        = {Akhra, Yan and Vanany, Iwan and Arief, Mansur},
  year          = {2026},
  howpublished  = {arXiv:2307.05470v},
  doi           = {10.48550/arXiv.2307.05470}
}

@article{lacommare2006cost,
  title   = {Cost of power interruptions to electricity consumers in the {United States} ({US})},
  author  = {LaCommare, Kristina Hamachi and Eto, Joseph H.},
  journal = {Energy},
  volume  = {31},
  number  = {12},
  pages   = {1845--1855},
  year    = {2006},
  doi     = {10.1016/j.energy.2006.02.008}
}

@techreport{taskforce2004blackout,
  title       = {Final Report on the {August} 14, 2003 Blackout in the {United States} and {Canada}: Causes and Recommendations},
  author      = {{U.S.--Canada Power System Outage Task Force}},
  institution = {U.S. Department of Energy and Natural Resources Canada},
  year        = {2004},
  month       = apr
}

@article{muratori2018impact,
  title   = {Impact of uncoordinated plug-in electric vehicle charging on residential power demand},
  author  = {Muratori, Matteo},
  journal = {Nature Energy},
  volume  = {3},
  number  = {3},
  pages   = {193--201},
  year    = {2018},
  doi     = {10.1038/s41560-017-0074-z}
}

@article{clementnyns2010impact,
  title   = {The impact of charging plug-in hybrid electric vehicles on a residential distribution grid},
  author  = {Clement-Nyns, Kristien and Haesen, Edwin and Driesen, Johan},
  journal = {IEEE Transactions on Power Systems},
  volume  = {25},
  number  = {1},
  pages   = {371--380},
  year    = {2010},
  doi     = {10.1109/TPWRS.2009.2036481}
}

@article{powell2022charging,
  title   = {Charging infrastructure access and operation to reduce the grid impacts of deep electric vehicle adoption},
  author  = {Powell, Siobhan and Cezar, Gustavo Vianna and Min, Liang and Azevedo, In{\^e}s M. L. and Rajagopal, Ram},
  journal = {Nature Energy},
  volume  = {7},
  number  = {10},
  pages   = {932--945},
  year    = {2022},
  doi     = {10.1038/s41560-022-01105-7}
}

@article{dobson2007complex,
  title   = {Complex systems analysis of series of blackouts: Cascading failure, critical points, and self-organization},
  author  = {Dobson, Ian and Carreras, Benjamin A. and Lynch, Vickie E. and Newman, David E.},
  journal = {Chaos},
  volume  = {17},
  number  = {2},
  pages   = {026103},
  year    = {2007},
  doi     = {10.1063/1.2737822}
}

@article{hines2009large,
  title   = {Large blackouts in {North America}: Historical trends and policy implications},
  author  = {Hines, Paul and Apt, Jay and Talukdar, Sarosh},
  journal = {Energy Policy},
  volume  = {37},
  number  = {12},
  pages   = {5249--5259},
  year    = {2009},
  doi     = {10.1016/j.enpol.2009.07.049}
}

@book{billinton1996reliability,
  title     = {Reliability Evaluation of Power Systems},
  author    = {Billinton, Roy and Allan, Ronald N.},
  edition   = {2nd},
  publisher = {Plenum Press},
  address   = {New York},
  year      = {1996}
}

@article{kahn1953methods,
  title   = {Methods of reducing sample size in {Monte Carlo} computations},
  author  = {Kahn, Herman and Marshall, Andy W.},
  journal = {Journal of the Operations Research Society of America},
  volume  = {1},
  number  = {5},
  pages   = {263--278},
  year    = {1953}
}

@article{shimodaira2000improving,
  title   = {Improving predictive inference under covariate shift by weighting the log-likelihood function},
  author  = {Shimodaira, Hidetoshi},
  journal = {Journal of Statistical Planning and Inference},
  volume  = {90},
  number  = {2},
  pages   = {227--244},
  year    = {2000}
}

@article{heidelberger1995fast,
  title   = {Fast simulation of rare events in queueing and reliability models},
  author  = {Heidelberger, Philip},
  journal = {ACM Transactions on Modeling and Computer Simulation},
  volume  = {5},
  number  = {1},
  pages   = {43--85},
  year    = {1995}
}

@article{pedregosa2011scikit,
  title   = {Scikit-learn: Machine learning in {Python}},
  author  = {Pedregosa, Fabian and Varoquaux, Ga{\"e}l and Gramfort, Alexandre and Michel, Vincent and Thirion, Bertrand and Grisel, Olivier and Blondel, Mathieu and Prettenhofer, Peter and Weiss, Ron and Dubourg, Vincent and others},
  journal = {Journal of Machine Learning Research},
  volume  = {12},
  pages   = {2825--2830},
  year    = {2011}
}

@book{asmussen2007stochastic,
  title={Stochastic Simulation: Algorithms and Analysis},
  author={Asmussen, S{\o}ren and Glynn, Peter W},
  year={2007},
  publisher={Springer}
}

@book{glasserman2004monte,
  title={Monte Carlo Methods in Financial Engineering},
  author={Glasserman, Paul},
  year={2004},
  publisher={Springer}
}

@article{lavenberg1981perspective,
  title={A perspective on the use of control variables to increase the efficiency of {M}onte {C}arlo simulations},
  author={Lavenberg, Stephen S and Welch, Peter D},
  journal={Management Science},
  volume={27},
  number={3},
  pages={322--335},
  year={1981},
  publisher={INFORMS}
}

@article{nelson1990control,
  title={Control variate remedies},
  author={Nelson, Barry L},
  journal={Operations Research},
  volume={38},
  number={6},
  pages={974--992},
  year={1990},
  publisher={INFORMS}
}

@article{hesterberg1998control,
  title={Control variates for probability and quantile estimation},
  author={Hesterberg, Timothy C and Nelson, Barry L},
  journal={Management Science},
  volume={44},
  number={9},
  pages={1295--1312},
  year={1998},
  publisher={INFORMS}
}

@article{oates2017control,
  title={Control functionals for {M}onte {C}arlo integration},
  author={Oates, Chris J and Girolami, Mark and Chopin, Nicolas},
  journal={Journal of the Royal Statistical Society: Series B},
  volume={79},
  number={3},
  pages={695--718},
  year={2017}
}

@article{peherstorfer2016optimal,
  title={Optimal model management for multifidelity {M}onte {C}arlo estimation},
  author={Peherstorfer, Benjamin and Willcox, Karen and Gunzburger, Max},
  journal={SIAM Journal on Scientific Computing},
  volume={38},
  number={5},
  pages={A3163--A3194},
  year={2016}
}

@article{peherstorfer2018survey,
  title={Survey of multifidelity methods in uncertainty propagation, inference, and optimization},
  author={Peherstorfer, Benjamin and Willcox, Karen and Gunzburger, Max},
  journal={SIAM Review},
  volume={60},
  number={3},
  pages={550--591},
  year={2018}
}

@article{angelopoulos2023prediction,
  title={Prediction-powered inference},
  author={Angelopoulos, Anastasios N and Bates, Stephen and Fannjiang, Clara and Jordan, Michael I and Zrnic, Tijana},
  journal={Science},
  volume={382},
  number={6671},
  pages={669--674},
  year={2023}
}

@article{angelopoulos2023ppi,
  title={{PPI++}: Efficient prediction-powered inference},
  author={Angelopoulos, Anastasios N and Duchi, John C and Zrnic, Tijana},
  journal={arXiv preprint arXiv:2311.01453},
  year={2023}
}

@article{schapire1990strength,
  title={The strength of weak learnability},
  author={Schapire, Robert E},
  journal={Machine Learning},
  volume={5},
  number={2},
  pages={197--227},
  year={1990}
}

@article{kearns1994cryptographic,
  title={Cryptographic limitations on learning {B}oolean formulae and finite automata},
  author={Kearns, Michael and Valiant, Leslie},
  journal={Journal of the ACM},
  volume={41},
  number={1},
  pages={67--95},
  year={1994}
}

@inproceedings{arief2021deep,
  title={Deep probabilistic accelerated evaluation: A robust certifiable rare-event simulation methodology for black-box safety-critical systems},
  author={Arief, Mansur and Bai, Yuanlu and Ding, Wenhao and He, Shengyi and Huang, Zhiyuan and Lam, Henry and Zhao, Ding},
  booktitle={International Conference on Artificial Intelligence and Statistics (AISTATS)},
  pages={595--603},
  year={2021},
  organization={PMLR}
}

@article{sibuya1960bivariate,
  title={Bivariate extreme statistics, {I}},
  author={Sibuya, Masaaki},
  journal={Annals of the Institute of Statistical Mathematics},
  volume={11},
  number={2},
  pages={195--210},
  year={1960}
}

@article{guikema2014predicting,
  title={Predicting hurricane power outages to support storm response planning},
  author={Guikema, Seth D and Nateghi, Roshanak and Quiring, Steven M and Staid, Andrea and Reilly, Allison C and Gao, Michael},
  journal={IEEE Access},
  volume={2},
  pages={1364--1373},
  year={2014}
}

@article{nateghi2014power,
  title={Power outage estimation for tropical cyclones: Improved accuracy with simpler models},
  author={Nateghi, Roshanak and Guikema, Seth D and Quiring, Steven M},
  journal={Risk Analysis},
  volume={34},
  number={6},
  pages={1069--1078},
  year={2014}
}

@article{lukuyu2024managing,
  title={Managing grid impacts from increased electric vehicle adoption in African cities},
  author={Lukuyu, June and Shirley, Rebekah and Taneja, Jay},
  journal={Scientific Reports},
  volume={14},
  number={1},
  pages={24320},
  year={2024},
  publisher={Nature Publishing Group UK London}
}

@article{nutkani2024impact,
  title={Impact of EV charging on electrical distribution network and mitigating solutions--A review},
  author={Nutkani, Inam and Toole, Hamish and Fernando, Nuwantha and Andrew, Loh Poh Chiang},
  journal={IET Smart Grid},
  volume={7},
  number={5},
  pages={485--502},
  year={2024},
  publisher={Wiley Online Library}
}

@article{khan2024impact,
  title={Impact assessment of electric vehicle charging on distribution networks},
  author={Khan, Mohd Owais and Kirmani, Sheeraz and Rihan, Mohd},
  journal={Renewable Energy Focus},
  volume={50},
  pages={100599},
  year={2024},
  publisher={Elsevier}
}

@article{south2023regularized,
  title={Regularized zero-variance control variates},
  author={South, Leah F and Oates, Chris J and Mira, Antonietta and Drovandi, Christopher},
  journal={Bayesian Analysis},
  volume={18},
  number={3},
  pages={865--888},
  year={2023},
  publisher={International Society for Bayesian Analysis}
}

@article{unterluggauer2022electric,
  title={Electric vehicle charging infrastructure planning for integrated transportation and power distribution networks: A review},
  author={Unterluggauer, Tim and Rich, Jeppe and Andersen, Peter Bach and Hashemi, Seyedmostafa},
  journal={ETransportation},
  pages={100163},
  year={2022},
  publisher={Elsevier}
}

@inproceedings{sun2025hosting,
  title     = {Hosting capacity assessment of distribution networks with {PV} and {EV} charging stations},
  author    = {Sun, Jiazheng and Hu, Xiaoxiao and Liu, Hengyu and Zhang, Zhi and Peng, Hao and Yang, Zhihao},
  booktitle = {2025 9th International Conference on Smart Grid and Smart Cities (ICSGSC)},
  pages     = {155--159},
  year      = {2025},
  organization = {IEEE}
}

@inproceedings{hu2025evload,
  title     = {Electric vehicle load forecasting based on probabilistic modeling and deep learning},
  author    = {Hu, Xiaoxiao and Sun, Jiazheng and Liu, Hengyu and Zhang, Zhi and Qiu, Feixiang and Yang, Zhihao},
  booktitle = {2025 9th International Conference on Smart Grid and Smart Cities (ICSGSC)},
  pages     = {456--461},
  year      = {2025},
  organization = {IEEE}
}

@inproceedings{chen2025reliability,
  title     = {Improving the operational reliability of smart grids: An ultra-short-term load forecasting scheme based on anisotropic radial basis function kernel extreme learning machine},
  author    = {Chen, Kerui and Zhang, Yuhua and Dai, Qiqi and Yu, Haixiong},
  booktitle = {2025 9th International Conference on Smart Grid and Smart Cities (ICSGSC)},
  pages     = {422--427},
  year      = {2025},
  organization = {IEEE}
}

@inproceedings{amilia2022designing,
  title={Designing an Optimized Electric Vehicle Charging Station Infrastructure for Urban Area: A Case study from Indonesia},
  author={Amilia, Nissa and Palinrungi, Zulkifli and Vanany, Iwan and Arief, Mansur},
  booktitle={2022 IEEE 25th International Conference on Intelligent Transportation Systems (ITSC)},
  pages={2812--2817},
  year={2022},
  organization={IEEE}
}

@article{zhang2023optimal,
  title={Optimal planning of flood-resilient electric vehicle charging stations},
  author={Zhang, Qian and Yu, Hao and Zhang, Guohui and Ma, Tianwei},
  journal={Computer-Aided Civil and Infrastructure Engineering},
  volume={38},
  number={4},
  pages={489--507},
  year={2023},
  publisher={Wiley}
}

@article{hussain2020optimal,
  title={Optimal sizing of battery energy storage system in a fast EV charging station considering power outages},
  author={Hussain, Akhtar and Bui, Van-Hai and Kim, Hak-Man},
  journal={IEEE Transactions on Transportation Electrification},
  volume={6},
  number={2},
  pages={453--463},
  year={2020},
  publisher={IEEE}
}

@article{frade2011optimal,
  title={Optimal location of charging stations for electric vehicles in a neighborhood in Lisbon, Portugal},
  author={Frade, In{\^e}s and Ribeiro, Anabela and Gon{\c{c}}alves, Gon{\c{c}}alo and Antunes, Ant{\'o}nio Pais},
  journal={Transportation Research Record},
  volume={2252},
  number={1},
  pages={91--98},
  year={2011},
  publisher={SAGE Publications Sage CA: Los Angeles, CA}
}

@article{hosseini2015refueling,
  title={Refueling-station location problem under uncertainty},
  author={Hosseini, Meysam and MirHassani, Seyyed A},
  journal={Transportation Research Part E: Logistics and Transportation Review},
  volume={84},
  pages={101--116},
  year={2015},
  publisher={Elsevier}
}

@article{yildiz2019urban,
  title={The urban recharging infrastructure design problem with stochastic demands and capacitated charging stations},
  author={Y{\i}ld{\i}z, Bar{\i}{\c{s}} and Olcaytu, Evren and {\c{S}}en, Ahmet},
  journal={Transportation Research Part B: Methodological},
  volume={119},
  pages={22--44},
  year={2019},
  publisher={Elsevier}
}

@article{snyder2005reliability,
  title={Reliability models for facility location: The expected failure cost case},
  author={Snyder, Lawrence V and Daskin, Mark S},
  journal={Transportation Science},
  volume={39},
  number={3},
  pages={400--416},
  year={2005},
  publisher={INFORMS}
}

@book{daskin2013network,
  title={Network and Discrete Location: Models, Algorithms, and Applications},
  author={Daskin, Mark S},
  year={2013},
  edition={2nd},
  publisher={Wiley}
}

\end{document}